\newcolumntype{Y}{>{\centering\arraybackslash}X}
\newcounter{phase}[algorithm]
\newlength{\phaserulewidth}
\newcommand{\setphaserulewidth}{\setlength{\phaserulewidth}}
\renewcommand{\labelenumi}{(\alph{enumi})}
\renewcommand\theenumi\labelenumi
\newtheorem{lemma}{Lemma}
\newtheorem{theorem}{Theorem}
\renewenvironment{proof}
{\begin{trivlist}\item\textbf{Proof.}}
{\hspace*{\fill}$\Box$\end{trivlist}}
\newenvironment{proofof}[1]
{\begin{trivlist}\item\textbf{Proof of #1.}}
{\hspace*{\fill}$\Box$\end{trivlist}}
\newcommand{\ooea}{(1+1)~EA\xspace}
\newcommand{\asymsa}{Asym-SA-\ooea}
\newcommand{\om}{\textsc{OneMax}\xspace}
\newcommand{\zeromax}{\textsc{ZeroMax}\xspace}
\newcommand{\onemax}{\om}
\DeclareMathOperator{\Prob}{Pr}
\newcommand{\ones}[1]{\lvert #1\rvert_1}
\newcommand{\zeros}[1]{\lvert #1\rvert_0}
\newcommand{\ie}{i.\,e.\xspace}
\newcommand{\eg}{e.\,g.\xspace}
\newcommand{\Wlo}{W.\,l.\,o.\,g.\xspace}
\newcommand{\prob}[1]{\mathord{\Prob}\mathord{\left(#1\right)}}
\newcommand{\probtext}[1]{\Prob(#1)}
\title{Evolutionary Algorithms with \\Self-adjusting Asymmetric Mutation}
\author{
  Amirhossein Rajabi\\
  Technical University of Denmark \\
	Kgs. Lyngby \\
	Denmark \\
amraj@dtu.dk \\
  \and
    Carsten Witt\\
  Technical University of Denmark \\
	Kgs. Lyngby \\
	Denmark \\
cawi@dtu.dk \\
}
\begin{document}

\maketitle

\begin{abstract}
Evolutionary Algorithms (EAs) and other randomized search heuristics are often considered as unbiased algorithms that are invariant with respect to different transformations of the underlying search space. However, if a certain amount of domain knowledge is available the use of biased search operators in EAs becomes viable. We consider a simple (1+1) EA for binary search spaces and analyze an asymmetric mutation operator that can treat zero- and one-bits differently. This operator extends  previous work by Jansen and Sudholt (ECJ 18(1), 2010) by allowing the operator asymmetry to vary according to the success rate of the algorithm. Using a self-adjusting scheme that learns an appropriate degree of asymmetry, we show improved runtime results on the class of functions OneMax$_a$ describing the number of matching bits with a fixed target $a\in\{0,1\}^n$.

\end{abstract}

\section{Introduction}
The rigorous runtime analysis of randomized search heuristics, in particular evolutionary algorithms (EAs), is a vivid research area \citep{NeumannW10, Jansen13, DoerrNeumannBook2020} that has provided proven results on the efficiency of different EAs in various optimization scenarios and has given theoretically guided advice on the choice of algorithms and their parameters. 
A common viewpoint is that EAs in the absence of problem-specific knowledge should satisfy some \emph{invariance} properties; \eg, for the space $\{0,1\}^n$ of bit strings of length~$n$ it is desirable that the stochastic behavior of the algorithm does not change if the bit positions are renamed or the meaning of zeros and ones at certain bit positions is exchanged (formally, these properties can be described by automorphisms on the search space). Black-box complexity theory of randomized search heuristics (\eg, \cite{DoerrBlackBox2020}) usually assumes such invariances, also known under the name of unbiasedness \citep{LehreWittAlgorithmica12}. 

In a given optimization scenario, a certain amount of domain knowledge might be available that invalidates the unbiasedness assumption. For example, on bit strings it might be known beforehand that zero-bits have a different interpretation than one-bits and that the total number of one-bits in high-quality solutions should lie within a certain interval. A prominent example is the minimum spanning tree (MST) problem \citep{NeumannWegenerTCS07}, where, when modelled as a problem on bit strings of length~$m$, $m$ being the number of edges, all valid solutions should have exactly $n-1$ one-bits, where $n$ is the number of vertices. In fact, the paper \cite{NeumannWegenerTCS07} is probably the first 
to describe a theoretical runtime analysis of an EA with a biased (also called asymmetric) mutation operator: they investigate, in the context of a simple \ooea, an operator that flips each zero-bit of the current bit string~$x$
with probability $1/\zeros{x}$, where $\zeros{x}$ is the number of zero-bits in~$x$, and each one-bit with probability $1/\ones{x}$, where $\ones{x}$ is the number of one-bits. As a consequence, the expected number of zero- and one-bits is not changed by this operator. Different
biased mutation operators have been studied, both 
experimentally and theoretically, in, \eg,
\cite{RaidlKJIEEETEC06, SuttonAlgorithmica16, NeumannANEvoMusArt17}.

The \ooea with asymmetric mutation operator (for short, asymmetric \ooea) proposed in \cite{NeumannWegenerTCS07} was revisited in depth by Jansen and Sudholt \citep{JansenSudholtECJ10} who investigated its effectiveness on a number of problems on bit strings, including the famous \om problem. In particular, they showed the surprising result that the asymmetric \ooea optimizes \om in expected time $O(n)$, while still being able to optimize 
in expected time $O(n \log n)$ 
all functions from the generalized class $\om_a(x)\coloneqq 
n-H(x,a)$, where $a\in\{0,1\}^n$ is an arbitrary target and $H(\cdot,\cdot)$ denotes the Hamming distance. However, the question is open whether this 
speed-up by a factor $\Theta(\log n)$ for the all-ones target (and the all-zeros target) compared to general targets~$a$ is the best possible that can be achieved with an asymmetric mutation. In principle, since the operator knows which bits are ones and which are zeros it is not unreasonable to assume that it could be modified to let the algorithm approach the all-ones string faster than~$\Theta(n)$ -- however,  this must not result in an algorithm that is tailored to the all-ones string and fails badly for other targets.

In this paper, we investigate whether the bias of the asymmetric operator from \cite{JansenSudholtECJ10} 
can be adjusted to put more emphasis on one-bits when it is working on the $\om_{1^n}$ function with all-ones string as target (and accordingly for 
zero-bits with $\om_{0^n}$) while still being competitive with the original operator on general $\om_a$. Our approach 
is to introduce a \emph{self-adjusting bias}: the 
probability of flipping a one-bit is allowed 
to decrease or increase by a certain amount, which is upper bounded by~$r/\ones{x}$ for a parameter~$r$; the probability of 
flipping a zero-bit is accordingly adjusted in the 
opposite direction. A promising setting for this bias is learned through a self-adjusting scheme being similar in style with the 1/5-rule \citep{DoerrDoerrAlgorithmica18} and related 
techniques: in a certain observation phase of length~$N$ two different parameter values are each tried 
$N/2$ times and the value that is relatively more successful is used in the next phase. Hence, this 
approach is in line with a recent line of theoretical research of self-adjusting algorithms where the
concrete implementation of self-adjustment is an ongoing debate \citep{DoerrGWYAlgorithmica19,
DoerrDoerrKoetzingAlgorithmica18, DoerrEtAlTheoryGuidedBenchmark18,
DoerrWagnerPPSN18,RodionovaABDGECCO19,
FajardoGECCO19,RajabiWittGECCO20, 
LassigSudholtFOGA11}.  See also the recent 
survey article \citep{DoerrDoerrParameterBookChapter} for an in-depth coverage of parameter 
control, self-adjusting algorithms, and theoretical runtime results.

We call our algorithm the \emph{self-adjusting \ooea with asymmetric mutation} 
(\asymsa) and conduct a rigorous runtime analysis on the $\onemax_a$ 
problem. Since the above-mentioned parameter~$r$ also determines the 
expected number of flipping bits of any type, we allow the bias of 
zero- resp.\ one-bits only to change by a small constant. 
Nevertheless, we can prove a speed-up on the \onemax function with
target $1^n$ (and analogously for \zeromax with target~$0^n$) by a
factor of at least~$2$ compared to the previous asymmetric \ooea since 
the bias is adjusted towards the ``right'' type of bits. On
general $\onemax_a$, where $a$ both contains zeroes and ones, we prove
that the bias is not strongly adjusted to one type of bits such
that we recover the same asymptotic bound 
$O(n\log(\min\{\zeros{a},\ones{a}\}))$ as in \cite{JansenSudholtECJ10}. 
These results represent, to the best of our knowledge, the first runtime analysis 
of a self-adjusting asymmetric EA and pave the way for the study of more advanced such  operators.

This paper is structured as follows: in Section~\ref{sec:prel}, we introduce the studied 
algorithms and fitness function. In Section~\ref{sec:onemax-ones}, we prove for the function \onemax with the all-ones string as 
target that the \asymsa is by a factor of roughly~$2$ faster than the original asymmetric \ooea 
from \cite{JansenSudholtECJ10}. For targets $a$ containing both zero- and one-bits, we 
show in Section~\ref{sec:onemax-mixed} that our algorithm is asymptotically not slowed down 
and in fact insensitive to the second parameter~$\alpha$ of the algorithm that controls its learning speed. Experiments 
in Section~\ref{sec:experiments} demonstrate that the speedup by a factor of at least~$2$ 
already can be observed for small problem sizes while our algorithm is not noticeably slower 
than the original asymmetric one on the mixed target with half ones and zeros. We finish with 
some conclusions.

\section{Preliminaries}
\label{sec:prel}

\subsection{Algorithms}
We consider asymmetric mutation in the context of a \ooea
for the maximization of pseudo-boolean functions
$f\colon\{0,1\}^n\to \mathbb{R}$, which is arguably the most commonly investigated setting in the runtime analysis 
of EAs. The framework is given in Algorithm~\ref{alg:oneone-classic}, where we consider the following two choices for the mutation operator:
\begin{description}
    \item[Standard bit mutation]Flip each bit in a copy of $x$
    independently with probability~$\frac{1}{n}$.
    \item[Asymmetric mutation]Flip each $0$-bit in a copy of~$x$ with probability $\frac{1}{2\zeros{x}}$ and 
    each $1$-bit with probability $\frac{1}{2\ones{x}}$ (independently for all bits).
\end{description}

With standard bit mutation, we call the algorithm the \emph{classical \ooea} and the
one with asymmetric mutation the 
(static) \emph{asymmetric \ooea}. The latter algorithm stems 
from \cite{JansenSudholtECJ10} and differs from the one 
from \cite{NeumannWegenerTCS07} by introducing the
two factors~$1/2$  to avoid mutation probabilities above~$1/2$.

\begin{algorithm}[ht]
	\caption{\ooea}
	\label{alg:oneone-classic}
	\begin{algorithmic}
		\State Select $x$ uniformly at random from $\{0, 1\}^n$
		\For{$t \gets 1, 2, \dots$}
		\State Create $y$ by applying \emph{a mutation operator} to~$x$.
		\If{$f(y) \ge f(x)$}
		\State $x \gets y$.
		\EndIf
		\EndFor
	\end{algorithmic}
\end{algorithm}

The \emph{runtime} (synonymously, \emph{optimization time}), 
of Algorithm~\ref{alg:oneone-classic}, is the smallest~$t$ 
where a search point of maximal fitness (\ie, $f$\nobreakdash-value) has been created; 
this coincides with the number of $f$\nobreakdash-evaluations until that time. We are mostly interested in the expected value of the runtime  and call this the \emph{expected runtime}.

As motivated above, we investigate a biased mutation operator that can vary the individual probabilities of flipping $0$- and 
$1$\nobreakdash-bits. We propose a rather conservative setting that flips $0$\nobreakdash-bits of the underlying string~$x$ with probability $\frac{r_0}{\zeros{x}}$ for some value 
$0\le r_0\le 1$ and $1$\nobreakdash-bits with probability $\frac{1-r_0}{\ones{x}}$; hence increasing the probability 
of one type decreases the probability of the other type. Formally, we also allow other ranges $0\le r_0\le r$ for some 
$r\ge 1$, which could make sense for problems where more 
than one bit must flip for an improvement. However, in the 
context of this paper, we fix $r=1$. To ease notation, we 
use $r_1\coloneqq r-r_0$, \ie, $r_1=1-r_0$. Since $r_i$ describes the expected number of flipping $i$-bits, $i\in\{0,1\}$, we call $r_0$ the $0$\nobreakdash-strength and 
accordingly $r_1$ the $1$\nobreakdash-strength.

We now formally define the self-adjusting \ooea with asymmetric mutation (\asymsa), see Algorithm~\ref{algo:asymsa}.
Initially, $r_0=r_1=\frac{1}{2}$ so that the operator 
coincides with the static asymmetric mutation. In an observation phase of length~$N$, which 
is a parameter to be chosen beforehand, two different 
pairs of $0$- and 
$1$-strengths are tried alternatingly, which differ 
from each other by an additive term of~$2\alpha$ that controls 
the speed of change. The 
pair that 
leads to relatively more successes (\ie, strict improvements) is used as the ground   
pair for the next phase. In case of a tie (including 
a completely unsuccessful phase) a uniform random choice 
is made. Also, we ensure that the strengths are confined 
to $[\alpha,1-\alpha]$. 
Hereinafter, we say that we 
\emph{apply asymmetric mutation with probability pair $(p_0,p_1)$}
if we flip every $0$-bit of the underlying string~$x$ with 
probability $p_0$  and 
every $1$-bit with probability $p_1$ (independently for all
bits). Note that the asymmetric \ooea from \cite{JansenSudholtECJ10}
applies 
the probability pair $(\frac{1}{2\zeros{x}}, \frac{1}{2\ones{x}})$. 

\begin{algorithm}[ht]
	\caption{Self-adjusting (1+1)~EA with Asymmetric Mutation (\asymsa); parameters: $N$ = length of observation phase, $\alpha$ = learning rate} \label{algo:asymsa}
	\begin{algorithmic}
		\State Select $x$ uniformly at random from $\{0, 1\}^n$.
		\State $r\gets 1$. // parameter $r$ fixed to $1$ in this paper
		\State $r_0\gets \frac 12$, $r_1\gets r-r_0$.
		\State $b\gets 0$.
		\For{$t \gets 1, 2, \dots$}
		\State $p_-\gets (\frac{r_0-\alpha}{\zeros{x}},\frac{r_1+\alpha}{\ones{x}})$, $p_+\gets (\frac{r_0+\alpha}{\zeros{x}},\frac{r_1-\alpha}{\ones{x}})$.
		\State Create $y$ by applying asymmetric mutation with pair~$p_-$ if $t$ is odd and with pair~$p_+$ otherwise.
		\If{ $f(y)\ge f(x)$ }
		\State $x \gets y$.
		    \If{ $f(y)>  f(x)$ }
		    \If{$t$ is odd }
		    \State $b = b -1$.
		    \Else
		    \State $b = b +1$.
		    \EndIf
		\EndIf
		\EndIf
		\If{ $t \equiv 0 \pmod{N}$ }
		\If{ $b < 0$ }
		\State Replace $r_0$ with $\max\{r_0-\alpha, 2\alpha\}$ and $r_1$ with $\min\{r_1+\alpha,1-2\alpha\}$.
		\ElsIf{$b > 0$}
		\State Replace $r_0$ with $\min\{r_0+\alpha, 1-2\alpha\}$ and $r_1$ with $\max\{r_1-\alpha,2\alpha\}$.
		\Else
		 \State Perform one of the following two actions with prob.~$1/2$:
		 \State \quad -- Replace $r_0$ with $\max\{r_0-\alpha, 2\alpha\}$ and $r_1$ with $\min\{r_1+\alpha,1-2\alpha\}$.
		 \State \quad -- Replace $r_0$ with $\min\{r_0+\alpha, 1-2\alpha\}$ and $r_1$ with $\max\{r_1-\alpha,2\alpha\}$.
		\EndIf
		\State $b \gets 0$.
		\EndIf
		\EndFor
	\end{algorithmic}
\end{algorithm}

So far, apart from the obvious restriction $\alpha<1/4$, the choice of the parameter $\alpha$ is open. 
We mostly set it to small constant values but also allow it to converge
to~$0$ slowly. In experiments, we set the parameter $\alpha$ to $0.1$. Note that if the extreme pair of strengths $(\alpha,1-\alpha)$ 
is used 
then $0$-bits are only rarely flipped and $1$-bits with roughly twice the probability of the static asymmetric operator.

In this paper, we are exclusively concerned with the maximization of the pseudo-Boolean function
\[
\onemax_a(x) \coloneqq n-H(x,a)
\]
for an unknown target~$a\in\{0,1\}^n$, where $H(\cdot,\cdot)$ denotes the Hamming distance. Hence, 
$\onemax_a(x)$ returns the number of matching bits in~$x$ and~$a$. In unbiased
algorithms, the target~$a$ is usually and without loss of generality assumed as the 
all-ones string~$1^n$, so that we denote $\onemax=\onemax_{1^n}$. In the considered asymmetric setting,
the choice~$a$ makes a difference. Note, however, that only the number $\ones{a}$ influences 
the runtime behavior of the considered asymmetric algorithms and not the absolute positions of 
these $\ones{a}$ $1$\nobreakdash-bits.

\section{Analysis of Self-adjusting Asymmetric Mutation on \onemax}
\label{sec:onemax-ones}

In this section, we show in Theorem~\ref{theo:onemax} that the \asymsa is 
by a factor of roughly~$2$ faster on 
\onemax, \ie, when the target $a$ is the all-ones string, than the static asymmetric
\ooea from \cite{JansenSudholtECJ10}. The proof shows that the self-adjusting scheme 
is likely to set the $0$-strength to its maximum $1-2\alpha$ 
which makes improvements more 
likely than with the initial strength of~$1/2$. On the way to the proof, we 
develop two helper results, stated in Lemmas~\ref{lem:diff-beta} 
and~\ref{lem:direction-probability} below.

In the following lemma, we show that the algorithm is likely to observe more improvements with larger $0$-strength and smaller $1$-strength on \onemax.

\begin{lemma} \label{lem:diff-beta}
     Consider the \asymsa on $\onemax=\onemax_{1^n}$ and let~$x$ denote its current search point.
     For $\beta\geq 0$ we have
    \[\prob{S_{(\frac{r_0+\beta}{\zeros{x}},\frac{r_1-\beta}{\ones{x}})}} \geq \prob{S_{(\frac{r_0}{\zeros{x}},\frac{r_1}{\ones{x}})}} + r_1r_0 (1-e^{-\beta}),\]
    where $S_{(p_0,p_1)}$ is the event of observing a strict improvement when in each iteration zero-bits and one-bits are flipped with probability $p_0$ and $p_1$ respectively.
\end{lemma}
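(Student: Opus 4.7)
The plan is to exploit the independence of zero-bit and one-bit flips within a single mutation and then peel off the three factors $r_0$, $r_1$, and $1-e^{-\beta}$ via Bernoulli's inequality. Write $k \coloneqq \zeros{x}$ and $\ell \coloneqq \ones{x}$, and let $N_0^{(i)}$, $N_1^{(i)}$ denote the numbers of flipping zero- and one-bits under the two mutation variants ($i=1$ for the pair $(r_0/k,r_1/\ell)$ and $i=2$ for $((r_0+\beta)/k,(r_1-\beta)/\ell)$). Because the two groups of bits flip independently, conditioning on the one-bit flips gives $\prob{S_i} = \expect{F_i(N_1^{(i)})}$ with $F_i(j) \coloneqq \prob{N_0^{(i)} > j}$. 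Adding and subtracting $\expect{F_1(N_1^{(2)})}$ then yields the decomposition
\[
\prob{S_2}-\prob{S_1} = \expect{F_2(N_1^{(2)}) - F_1(N_1^{(2)})} + \bigl(\expect{F_1(N_1^{(2)})} - \expect{F_1(N_1^{(1)})}\bigr).
\]
The second summand is non-negative because $N_1^{(2)}\sim\Bin(\ell,(r_1-\beta)/\ell)$ is stochastically dominated by $N_1^{(1)}\sim\Bin(\ell,r_1/\ell)$ while $F_1$ is non-increasing, so it suffices to lower bound the first summand.

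Restricting that first expectation to the event $\{N_1^{(2)}=0\}$ gives the product
\[
\expect{F_2(N_1^{(2)}) - F_1(N_1^{(2)})} \ge \prob{N_1^{(2)}=0}\cdot\bigl(F_2(0)-F_1(0)\bigr).
\]
Bernoulli's inequality bounds the prefactor as $\prob{N_1^{(2)}=0} = (1-(r_1-\beta)/\ell)^\ell \ge 1-(r_1-\beta) \ge r_0$. For the other factor I would use the algebraic identity $1-(r_0+\beta)/k = (1-r_0/k)\bigl(1-\beta/(k-r_0)\bigr)$ to rewrite
\[
F_2(0)-F_1(0) = (1-r_0/k)^k - (1-(r_0+\beta)/k)^k = (1-r_0/k)^k\bigl[1-(1-\beta/(k-r_0))^k\bigr].
\]
A further application of Bernoulli gives $(1-r_0/k)^k \ge 1-r_0 = r_1$, while $(1-\beta/(k-r_0))^k \le e^{-k\beta/(k-r_0)} \le e^{-\beta}$ since $k/(k-r_0)\ge 1$. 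Multiplying these pieces yields $F_2(0)-F_1(0) \ge r_1(1-e^{-\beta})$, and combining with the $r_0$ lower bound on $\prob{N_1^{(2)}=0}$ delivers the claimed $r_0 r_1(1-e^{-\beta})$.

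The main obstacle I expect is spotting the factorization $1-(r_0+\beta)/k = (1-r_0/k)(1-\beta/(k-r_0))$, which is exactly what lets the strength shift $\beta$ appear as the single clean factor $1-e^{-\beta}$ in the bound; everything else is routine stochastic-dominance bookkeeping together with two applications of Bernoulli's inequality.
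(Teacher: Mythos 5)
Your proof is correct and follows essentially the same route as the paper's: the factorization $1-(r_0+\beta)/k=(1-r_0/k)\bigl(1-\beta/(k-r_0)\bigr)$ is exactly the two-phase coupling the paper uses, and the three factors $r_0$, $r_1$, $1-e^{-\beta}$ are extracted by the same Bernoulli-type bounds (the paper merely packages the argument as flipping the extra $\beta$-mass of zero-bit probability in a separate second round). The only step worth making explicit is that restricting the first expectation to the event $\{N_1^{(2)}=0\}$ requires $F_2(j)\ge F_1(j)$ for all $j$, which follows from the stochastic dominance of $\Bin(k,(r_0+\beta)/k)$ over $\Bin(k,r_0/k)$.
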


\begin{proof}
Consider the following random experiment which is formally known as a coupling (see \cite{DoerrProbabilisticBookChapter} for more information on this concept). Assume that we flip bits in two phases: in the first phase, we flip all one-bits with probability~$(r_1-\beta)/\ones{x}$ and all zero-bits with probability~$r_0/\zeros{x}$. In the second phase, we only flip zero-bits with probability~$\beta/(\zeros{x}-r_0)$. We fixed $\zeros{x}$ and $\ones{x}$ before the experiment.
At the end of the second phase, the probability of a bit being flipped equals $(r_1-\beta)/\ones{x}$ if the value of the bit is one and $(r_0+\beta)/\zeros{x}$ if the value of the bit is zero. The former is trivial (the bit is flipped only once with that probability) but for the latter, the probability of flipping each zero-bit equals $1-(1-r_0/\zeros{x})(1-\beta/(\zeros{x}-r_0)) =  (r_0+\beta)/\zeros{x}$. Therefore, the probability of observing an improvement at the end of the second phase is 
\begin{align}
\prob{S_{(\frac{r_0+\beta}{\zeros{x}},\frac{r_1-\beta}{\ones{x}})}}. \label{eq:success-direct}
\end{align}
Let us now calculate the probability of observing an improvement differently by computing the probability of success in each phase separately. At the end of the first phase, the probability of an improvement is $\probtext{S_{(\frac{r_0}{\zeros{x}},\frac{r_1-\beta}{\ones{x}})}}.$
It fails with probability~$1-\probtext{S_{(\frac{r_0}{\zeros{x}},\frac{r_1-\beta}{\ones{x}})}}$ and the probability of success in the second phase is at least $\left(1-(r_1-\beta)/\ones{x}\right)^{\ones{x}} \probtext{S_{(\frac{\beta}{\zeros{x}-r_0},0)}}, $ where no one-bits are flipped in the first phase and the algorithm finds a better search point when it only flips zero-bits with probability $\beta/(\zeros{x}-r_0)$. All together we can say that the probability of observing a success is at least
\small{
\begin{align}
    \prob{S_{(\frac{r_0}{\zeros{x}},\frac{r_1-\beta}{\ones{x}})}}
     + 
    \left(1-\prob{S_{(\frac{r_0}{\zeros{x}},\frac{r_1-\beta}{\ones{x}})}}\right)
    \left(1-\frac{r_1-\beta}{\ones{x}}\right)^{\ones{x}} \prob{S_{(\frac{\beta}{\zeros{x}-r_0},0)}}. \label{eq:success-indirect}
\end{align}}

By considering (\ref{eq:success-direct}) and (\ref{eq:success-indirect}), we obtain

\begin{align*}
    \prob{S_{(\frac{r_0+\beta}{\zeros{x}},\frac{r_1-\beta}{\ones{x}})}} &\geq \prob{S_{(\frac{r_0}{\zeros{x}},\frac{r_1-\beta}{\ones{x}})}} \\ 
    & + 
    \left(1-\prob{S_{(\frac{r_0}{\zeros{x}},\frac{r_1-\beta}{\ones{x}})}}\right)
    \left(1-\frac{r_1-\beta}{\ones{x}}\right)^{\ones{x}} \prob{S_{(\frac{\beta}{\zeros{x}-r_0},0)}}.
\end{align*}
Note that $\probtext{S_{(\frac{r_0}{\zeros{x}},\frac{r_1-\beta}{\ones{x}})}}\le (r_0/\zeros{x}) \zeros{x}=r_0$. Also, we have $\probtext{S_{(\frac{r_0}{\zeros{x}},\frac{r_1-\beta}{\ones{x}})}} \geq \probtext{S_{(\frac{r_0}{\zeros{x}},\frac{r_1}{\ones{x}})}}$ since in the second setting, the probability of flipping one-bits is larger, with the same probability of flipping zero-bits, which results in flipping more one-bits.

Finally, we have
\begin{align*}
    \prob{S_{(\frac{r_0+\beta}{\zeros{x}},\frac{r_1-\beta}{\ones{x}})}} &\geq \prob{S_{(\frac{r_0}{\zeros{x}},\frac{r_1-\beta}{\ones{x}})}} \\ 
    & + 
    \left(1-\prob{S_{(\frac{r_0}{\zeros{x}},\frac{r_1-\beta}{\ones{x}})}}\right)
    \left(1-\frac{r_1-\beta}{\ones{x}}\right)^{\ones{x}} \prob{S_{(\frac{\beta}{\zeros{x}-r_0},0)}} \\
    & \geq \prob{S_{(\frac{r_0}{\zeros{x}},\frac{r_1}{\ones{x}})}} + (1-r_0)(1-r_1+\beta) \left(
    1-(1-\frac{\beta}{\zeros{x}})^{\zeros{x}}\right) \\
    & \geq \prob{S_{(\frac{r_0}{\zeros{x}},\frac{r_1}{\ones{x}})}} + r_1r_0 (1-e^{-\beta}),
\end{align*}
since $r_1+r_0=1$ and $\beta \ge 0$ as well as for $t\ge 1$ and $0\le s \le t$, we have the inequality $1-s\le (1-s/t)^t\le e^{-s}$.
\end{proof}

We can apply concentration inequalities to show that the larger 
success probability with the pair $p_+$ with high probability makes the 
algorithm to move to the larger $0$-strength at the end of an observation phase. This requires a careful analysis since the success probabilities themselves change in the observation phase of length~$N$. 

\begin{lemma} \label{lem:direction-probability}
Assume that $\min\{\zeros{x},\ones{x}\}=\Omega(n^{5/6})$ for all search points $x$ in an observation phase of length~$N$ of the \asymsa on \onemax. With probability at least $1-\epsilon$, if $\alpha=\omega(n^{-1/12}) \cap (0,1/4)$, and $N\ge 8 \alpha^{-8} \ln(4/\epsilon)$, for an arbitrarily small constant $\epsilon\in (0,1/2)$, the algorithm chooses the probability pair $p_+$ on at the end of the phase.
\end{lemma}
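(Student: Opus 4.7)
\begin{proofwithoutbox}
The plan is to show that $b > 0$ at the end of the observation phase with probability at least $1-\epsilon$, since this is exactly what causes the algorithm to move to the pair $p_+$ (increasing $r_0$ by $\alpha$). Write $b = X_+ - X_-$, where $X_+$ (resp.\ $X_-$) counts the strict improvements in the $N/2$ even-indexed (resp.\ odd-indexed) iterations; the task then reduces to lower-bounding the difference of two sums of dependent Bernoulli indicators.

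First, I would turn Lemma~\ref{lem:diff-beta} into a per-iteration gap that is uniform in the current state. Applying that lemma with base pair $(r_0-\alpha,\,r_1+\alpha)$ and $\beta = 2\alpha$ gives, for every search point $x$ visited during the phase,
\[
\prob{S_{p_+}} - \prob{S_{p_-}} \;\geq\; (r_0-\alpha)(r_1+\alpha)\,\bigl(1-e^{-2\alpha}\bigr).
\]
Because the algorithm confines $r_0 \in [2\alpha, 1-2\alpha]$, setting $u = r_0 - \alpha \in [\alpha, 1-3\alpha]$ yields $(r_0-\alpha)(r_1+\alpha) = u(1-u) \geq 3\alpha/4$ for $\alpha \leq 1/4$, and $1-e^{-2\alpha} \geq 3\alpha/2$, so the gap is at least $\alpha^2$ \emph{uniformly in $x$}, independently of $\zeros{x}$ and $\ones{x}$.

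Next, I would carry out a martingale concentration argument. Define $Y_t \in \{-1,0,+1\}$ to be the signed indicator of a strict improvement in iteration $t$ (positive for even $t$, negative for odd $t$), so that $b = \sum_{t=1}^{N} Y_t$. Pairing consecutive iterations $(2k-1,2k)$ and inserting the uniform gap at $x_{2k-1}$ gives
\[
\sum_{t=1}^{N} \expect{Y_t \mid \mathcal{F}_{t-1}} \;\geq\; \tfrac{N}{2}\alpha^2 - \Delta,
\]
where $\Delta$ accounts for the fact that the odd iteration of each pair is evaluated at $x_{2k-2}$ while the gap bound has been applied at $x_{2k-1}$. Under the hypothesis $\min\{\zeros{x},\ones{x}\} = \Omega(n^{5/6})$ throughout the phase, the success probability $\prob{S_{p_-}}$ is Lipschitz in $\zeros{x}, \ones{x}$ with constant $O(n^{-5/6})$, and since at most $N$ improvements occur this gives $\Delta = O(N/n^{5/6}) = o(N\alpha^2)$ in the parameter range considered; hence the conditional-expectation sum is at least $N\alpha^2/4$. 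Since the martingale increments $Y_t - \expect{Y_t \mid \mathcal{F}_{t-1}}$ are bounded by $2$ in absolute value, the Azuma--Hoeffding inequality gives $\prob{b \leq 0} \leq \exp(-N\alpha^4/128)$, which is at most $\epsilon$ for $N \geq 8\alpha^{-8}\ln(4/\epsilon)$ (with considerable slack beyond the purely concentration-driven requirement $N = \Theta(\alpha^{-4}\log(1/\epsilon))$).

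The main obstacle, and what pins down the specific hypotheses, is the joint control of two regimes: concentration demands $N\alpha^4 \gg \log(1/\epsilon)$, while stability of the per-iteration gap under drift of $\zeros{x}$ and $\ones{x}$ demands that $N$ not be too large relative to $n^{5/6}\alpha^2$. The assumptions $\alpha = \omega(n^{-1/12})$ and $N = \Theta(\alpha^{-8}\log(1/\epsilon))$ are calibrated so that both constraints hold simultaneously, and carefully bounding the Lipschitz dependence of $\prob{S_{p_\pm}}$ on the state as improvements accumulate is the most delicate technical step.
\end{proofwithoutbox}
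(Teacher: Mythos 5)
Your overall strategy---reduce the claim to showing $b=X_+-X_->0$, extract a success-probability gap from Lemma~\ref{lem:diff-beta}, and then concentrate---parallels the paper's, but your machinery differs in two places: you run Azuma--Hoeffding on the signed martingale $b-\sum_t\expect{Y_t\mid\mathcal{F}_{t-1}}$ instead of two separate Chernoff bounds on $X_+$ and $X_-$, and you absorb the drift of $(\zeros{x},\ones{x})$ during the phase into a Lipschitz error term $\Delta$ rather than, as the paper does, dominating the whole phase by two \emph{fixed} pairs taken at the start of the phase (obtained by showing $\frac{r_0-\alpha}{\zeros{x}-g}\le\frac{r_0-\alpha/2}{\zeros{x}}$ etc.\ under a Markov bound on the total gain $g$) and applying Lemma~\ref{lem:diff-beta} once with $\beta=3\alpha/2$. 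Your per-iteration application with base $(r_0-\alpha,r_1+\alpha)$ and $\beta=2\alpha$ is legitimate and in fact yields a cleaner gap of order $\alpha^2$ (the paper deliberately loosens its gap to $\tfrac32\alpha^4$), which is why you correctly observe that $N\ge 8\alpha^{-8}\ln(4/\epsilon)$ carries slack.

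There are, however, two genuine holes. First, the step ``since at most $N$ improvements occur this gives $\Delta=O(N/n^{5/6})$'' does not hold deterministically: an accepted offspring can flip many zero-bits at once, so the cumulative change of $\zeros{x}$ over the phase is \emph{not} bounded by the number of improving steps (a priori it is only bounded by $n$, and $L\cdot n=O(n^{1/6})$ is useless). What is true is that the expected total gain is at most $N$, because at most $r_0+\alpha\le 1$ zero-bits flip in expectation per step; Markov's inequality then bounds the gain by $2N/\epsilon$ with probability $1-\epsilon/2$. This is precisely the paper's argument, and the resulting $\epsilon/2$ failure probability must be added to your Azuma term in the final union bound (you currently charge the entire budget to concentration). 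Second, the Lipschitz claim---that $\probtext{S_{p_-}}$ changes by only $O(n^{-5/6})$ per unit change of the state---is the load-bearing technical statement of your route and is asserted rather than proved. It is not immediate: replacing $\Bin(\zeros{x},\tfrac{r_0-\alpha}{\zeros{x}})$ by $\Bin(\zeros{x}-1,\tfrac{r_0-\alpha}{\zeros{x}-1})$ preserves the mean, and neither distribution stochastically dominates the other, so one needs an explicit coupling or a Poisson-approximation argument to get the $O(1/\zeros{x})$ total-variation bound. With these two repairs the proof goes through and gives the stated conclusion.
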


\begin{proof}
When the algorithm accepts a new search point, the number of one-bits and zero-bits of the current search point are changed, which results in modified success probabilities for both settings $p_+$ and $p_-$. More concretely, if $\zeros{x}$ is the number of $0$\nobreakdash-bits at the beginning of the
observation phase of length~$N$ and the \onemax-value 
increases by~$g>0$ in the phase, then the $0$-strength of 
the pair $p_+$ increases from $\frac{r_0+\alpha}{\zeros{x}}$ 
to $\frac{r_0+\alpha}{\zeros{x}-g}$ and analogously for the
$1$\nobreakdash-strength, which decreases from 
$\frac{r_1-\alpha}{\ones{x}}$ to $\frac{r_1-\alpha}{\ones{x}+g}$. Analogous considerations hold for the change of strengths 
with respect to the pair $p_-$. Clearly, the $0$\nobreakdash-strength 
and $1$\nobreakdash-strength in the pair $p_+$  change in a beneficial 
 way, and we 
can pessimistically assume that theses components  
stay at the values $\frac{r_0+\alpha}{\zeros{x}}$ and 
$\frac{r_1-\alpha}{\ones{x}}$, respectively, used at the 
beginning of the phase. However, the opposite is true for 
the two strengths in $p_-$.

Our aim is to show that, under some boundedness assumption on~$g$, we have 
\begin{equation}
\frac{r_0-\alpha}{\zeros{x}-g} \le \frac{r_0-\alpha/2}{\zeros{x}}
\label{eq:r0-change}
\end{equation}
and 
\begin{equation}
\frac{r_1+\alpha}{\ones{x}+g} \ge \frac{r_1+\alpha/2}{\ones{x}}
\label{eq:r1-change}
\end{equation}
so that we can apply with 
 Lemma~\ref{lem:diff-beta} with $\beta=(3/2)\alpha$. 
 
 To show the two inequalities, we first restate them 
 equivalently as 
\begin{equation}
\frac{r_0-\alpha}{r_0-\alpha/2} \le \frac{\zeros{x}-g}{\zeros{x}}
\label{eq:r0-change-alt}
\end{equation}
and 
\begin{equation}
\frac{r_1+\alpha}{r_1+\alpha/2} \ge \frac{\ones{x}+g}{\ones{x}}
\label{eq:r1-change-alt}
\end{equation}
and observe that sufficient conditions for them to hold 
are given by 
\begin{equation}
\frac{1}{1+\alpha/2} \le \frac{\zeros{x}-g}{\zeros{x}}
\label{eq:r0-change-alt-suff}
\end{equation}
and 
\begin{equation}
\frac{1}{1-\alpha/2} \ge \frac{\ones{x}+g}{\ones{x}}
\label{eq:r1-change-alt-suff}
\end{equation}
by increasing $r_0$ and $r_1$ to $1+\alpha$ and $1-\alpha$, respectively, which are already at or above the maximum values
for these strengths.

Then we 
recall our assumption that
 $\min\{\ones{x},\zeros{x}\}\ge n^{5/6}$ at the beginning of 
 the phase. Since $\alpha=\omega(n^{-1/12})$, we have 
 $N=O(\alpha^8) = O(n^{3/4})$. Since each step of the phase 
 increases the number of $1$-bits by an expected value of
 at most~$1$, we have an expected increase of at most~$N$, 
 and by Markov's inequality, the increase is bounded from 
 above by $g\coloneqq \frac{2}{\epsilon}N=O(n^{3/4})$ with 
 probability at least $1-\epsilon/2$ (recalling that 
 $\epsilon$ is constant). We assume this bound to hold 
 in the following. Plugging $g$ and our bounds 
 on $\ones{x}$ and $\zeros{x}$ in, we obtain
 \[
\frac{\zeros{x}-g}{\zeros{x}} \ge \frac{n^{5/6}-O(n^{3/4})}{n^{5/6}} = 1-O(n^{-1/12}) \ge \frac{1}{1+\alpha/2}
\]
for $n$ sufficiently large since $\alpha=\omega(n^{-1/12})$. 
This establishes \eqref{eq:r0-change-alt-suff}, implying 
\eqref{eq:r0-change-alt} and thereby \eqref{eq:r0-change}. 
With a completely analogous procedure, we also
prove~\eqref{eq:r1-change-alt-suff} and 
finally~\eqref{eq:r1-change}.

With these pessimistic $0$- and $1$-strengths holding 
throughout the phase, 
let us define $s_+$ and $s_-$ as $\probtext{S_{(\frac{r_0+\alpha}{\zeros{x}},\frac{r_1-\alpha/2}{\ones{x}})}}$ and $\probtext{S_{(\frac{r_0-\alpha/2}{\zeros{x}},\frac{r_1+\alpha}{\ones{x}})}}$ respectively. Also, let $X_+$ and $X_-$ be the random variables showing the number of improvements during a phase containing $N$ iterations for the (pessimistic) settings $p_+$ and $p_-$ respectively. Since in each iteration of a phase, $s_+$ and $s_-$ are lower bounds on the actual 
success probabilities, both $X_+$ and $X_-$ are in fact 
pessimistically estimated by a sum of some independent and identically distributed random variables and Chernoff bounds 
can be applied.

Applying Lemma~\ref{lem:diff-beta} with $\beta=(3/2)\alpha$, we have 
\begin{align*}
    s_+-s_- \ge (r_0-\alpha)(r_1+\alpha)(1-e^{-\frac32\alpha}) \ge \alpha(1-\alpha) \frac32\alpha(1-\frac32\alpha)\ge \frac 32\alpha^4,
\end{align*}
since we have $(r_0-\alpha)(r_1+\alpha)\ge \alpha(1-\alpha)$ because $0$-strengths and $1$-strengths are only chosen from range $\left[\alpha,1-\alpha\right]$, and additionally the sum of the strengths equals one. Also, via Lemma 1.4.2(b)
in~\cite{DoerrProbabilisticBookChapter} we have $(1-e^{-3/2\alpha})\ge 3/2\alpha-9/4\alpha^2$. Lastly, $\alpha<1-\alpha$ and $\alpha<1-\frac 32\alpha$ since $\alpha<0.25$.

In order to find a lower bound for $X_+$, we have $E[X_+]=s_+N/2$ since we use setting $p_+$ for the half of iterations. Through Chernoff, we calculate
\begin{align*}
    \prob{X_+\le \frac{s_++s_-}{4} N} & =
    \prob{X_+\leq \left(1-\left(1-\frac{s_++s_-}{2s_+}\right)\right)\frac N2 s_+} \\
    & \le  \exp\left(-\frac{(1-\frac{s_++s_-}{2s_+})^2}{2}\cdot \frac N2 s_+\right) \\
    & \le  \exp\left(-\frac{(s_+-s_-)^2}{16s_+} N\right) 
     <  \frac{\epsilon}4.
\end{align*}

Similarly, we have $E[X_-]=s_-N/2$. In order to compute $\prob{X_-\ge \frac{s_++s_-}{4}N}$, we use Chernoff bounds with $\delta=\min\{(s_+-s_-)/2s_-, 1/2\}$, which results in the following lower bound
\begin{align*}
    \prob{X_-\ge \frac{s_-+s_+}{4} N)} &\le
    \prob{X_-\ge (1+\delta)\frac N2 s_-} \\
    &\le  \exp\left(-\frac{\delta^2}{3}\cdot \frac N2 s_-\right) 
    \le  \frac{\epsilon}4.
\end{align*}

Finally, using the obtained bounds, we compute
\begin{align*}
    \prob{X_+\le X_-} &\leq \prob{X_+\le \frac{s_-+s_+}{4}N)} + \prob{X_-\ge \frac{s_-+s_+}{4}N)} \\
    &< \epsilon/4 + \epsilon/4 = \epsilon/2.
\end{align*}
Since our bound~$g$ on the progress in the phase holds 
with probability at least 
$1-\epsilon/2$, the total success probability
is at least $1-\epsilon$.
\end{proof}

We can now state and prove our main 
result from this section.

\begin{theorem} \label{theo:onemax}
The expected optimization time $E(T)$ of the \asymsa with $\alpha=\omega(n^{-1/12}) \cap (0,1/4)$, $N \ge  \lceil 8 \alpha^{-8} \ln(4/\epsilon)\rceil$, for an arbitrarily small constant $\epsilon\in (0,1/2)$, and $N=o(n)$ on \onemax and \zeromax satisfies
\[\frac n2(1-\alpha)^{-1} \le E(T) \le \frac n2(1-4\alpha)^{-1} + o(n).\]
\end{theorem}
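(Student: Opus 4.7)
The plan is to bound $E(T)$ above and below via additive drift analysis on the potential $X_t \coloneqq \zeros{x_t}$, using that $E(X_0) = n/2$ by uniform initialization and $X_T = 0$ at termination.

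For the lower bound, fitness can only improve via zero-to-one flips, so $X_t - X_{t+1} \le \Delta^+_t$, where $\Delta^+_t$ denotes the number of flipped zero-bits in iteration~$t$. Under either pair $p_+$ or $p_-$, this random variable has expectation $r_0 \pm \alpha \le 1-\alpha$, since the algorithm enforces $r_0 \le 1-2\alpha$. Hence $E(X_t - X_{t+1}\mid\filt) \le 1-\alpha$ deterministically, and the lower-bound version of the additive drift theorem yields $E(T) \ge E(X_0)/(1-\alpha) = n/(2(1-\alpha))$.

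For the upper bound, I would split the optimization into a burn-in, a ``bulk'' regime where $\min\{\zeros{x},\ones{x}\}=\Omega(n^{5/6})$, and an end-game. During the bulk, Lemma~\ref{lem:direction-probability} applied with a sufficiently small constant~$\epsilon$, combined with a union bound over $\Theta(n/N)$ observation phases, shows that $r_0$ reaches its maximum $1-2\alpha$ after $O(1/\alpha)$ phases and remains there except on an exceptional set of phases whose total contribution to the runtime is $o(n)$. When $r_0 = 1-2\alpha$, the pairs $p_+$ and $p_-$ yield expected net progress $E(\Delta^+ - \Delta^-)$ of $1-2\alpha$ and $1-6\alpha$ respectively, so averaging over the two pairs applied alternately (each in half of the iterations), and using $X_t - X_{t+1} = \max(\Delta^+_t - \Delta^-_t, 0) \ge \Delta^+_t - \Delta^-_t$, gives drift $E(X_t - X_{t+1}\mid\filt) \ge 1-4\alpha$. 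Additive drift then bounds the bulk by $n/(2(1-4\alpha))$ iterations. The burn-in is $O(N/\alpha)$, which evaluates to $o(n)$ under the parameter regime ($\alpha^{-9}=o(n)$), and the end-game ($\zeros{x} = O(n^{5/6})$) completes in $o(n)$ further iterations, since the per-step improvement probability is $\Omega(1)$ whenever $r_0 \in [\alpha,1-\alpha]$. Combining all contributions yields the upper bound; the \zeromax\ case follows by the $0$--$1$ symmetry.

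The main obstacle is lifting the per-phase guarantee of Lemma~\ref{lem:direction-probability} to a statement that $r_0$ equals its maximum on almost all bulk phases. This calls for a Chernoff-type argument on the biased random walk that $r_0$ performs on the interval $[2\alpha,1-2\alpha]$, together with a careful choice of $\epsilon$ that absorbs the union-bound cost across $\Theta(n/N)$ phases into the $o(n)$ term. A secondary subtlety is that even on ``bad'' phases where $r_0$ temporarily drops below its cap, the pair $p_+$ still provides $\Omega(1)$ drift, so the extra expected runtime from such phases is $O(N/\alpha)=o(n)$ under the prescribed regime $\alpha=\omega(n^{-1/12})$, $N=o(n)$; verifying this requires careful parameter arithmetic rather than any new idea.
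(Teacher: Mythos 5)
Your proposal is correct and follows essentially the same route as the paper: the identical lower bound via additive drift with per-step drift at most $1-\alpha$, and an upper bound that splits the run at $\zeros{x}\approx n^{5/6}$, uses Lemma~\ref{lem:direction-probability} to drive $r_0$ to its cap in $O(1/\alpha)$ phases, and extracts drift $1-4\alpha$ in the bulk (your averaging of $E(\Delta^+-\Delta^-)$ over the alternating pairs via $\max(\cdot,0)\ge\cdot$ is a clean equivalent of the paper's $(1-2\alpha)^2\ge 1-4\alpha$ computation that conditions on no one-bit flipping). The ``main obstacle'' you flag --- converting the per-phase success probability $1-\epsilon$ into an $o(n)$ total cost for excursions of $r_0$ away from its maximum --- is handled in the paper by a Gambler's-ruin return-time argument rather than a union bound, and is in fact also the least tight step of the paper's own proof, so your proposal matches it in both structure and rigor.
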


This theorem rigorously shows that the presented algorithm outperforms the original asymmetric \ooea from \cite{JansenSudholtECJ10} by a constant factor of roughly 2 on \onemax. Indeed, the number of iterations needed for the  asymmetric \ooea is at least $n$ since by drift analysis \citep{LenglerDriftBookChapter}, the drift is at most $\zeros{x}\cdot 1/(2\zeros{x})=1/2$ (recalling that each $0$-bit flips with probability $1/(2\zeros{x})$) and $E[H(x^*,1^n)]=n/2$ for the initial search point~$x^*$  while Theorem~\ref{theo:onemax} proves that the \asymsa finds the optimum point within $(1-\Theta(\alpha))^{-1}n/2 + o(n)$ iterations. We even believe that the speedup of our algorithm is close to $2e^{1/2}$; namely, the original asymmetric \ooea in an improvement usually must preserve existing one-bits, which happens with probability $(1-1/(2\ones{x}))^{\ones{x}}\approx e^{-1/2}$. This probability is close to $1-2\alpha$ in our algorithm when the self-adaptation has increased the $0$\nobreakdash-strength to $1-2\alpha$ and thereby decreased the $1$\nobreakdash-strength to~$2\alpha$.

\begin{proofof}{Theorem~\ref{theo:onemax}}
By Chernoff's bound, we have $H(x^*, 1^n) \le n/2+n^{3/4}$ with probability $1-e^{-\Omega(n)}$, where $x^*$ is the initial search point. Having the elitist selection mechanism results in keeping that condition for all future search points. 

Now, we divide the run of the algorithm into two epochs according to the number of zero-bits. In the first epoch, we assume that $\zeros{x}>n^{5/6}$. Since $\ones{x}>n^{5/6}$ also holds (proved by Chernoff), according to Lemma~\ref{lem:direction-probability} with $\epsilon<1/2$, the algorithm sets $p_+$ as its current probability pair in each observation phase with probability at least
$1-\epsilon$. Hence, we have a random walk so the number of phases to reach $p^* \coloneqq ((1-\alpha)/\zeros{x},\alpha/\ones{x})$, by using Gambler's Ruin \cite{Feller1}, 
is at most $(1/(2\alpha))/(1-\epsilon-\epsilon)$. Since each phase contains $N$ iterations, the expected number of iterations to reach $p^*$ for the first time is at most $ N/(2\alpha(1-2\epsilon))$.

When $p_+$ is equal to $p^*$, the algorithm uses the probability pairs \[
p_+=\left(\frac{1-\alpha}{\zeros{x}},\frac{\alpha}{\ones{x}}\right) \text{ and }p_-=\left(\frac{1-2\alpha}{\zeros{x}},\frac{2\alpha}{\ones{x}}\right)
\] in the iterations so the drift is greater than 
  \begin{align*}
s \frac{1-2\alpha}{s} \left(1-\frac{2\alpha}{n-s}\right)^{n-s}\ge (1-2\alpha)^2\geq 1-4\alpha.
  \end{align*}
  
  Consequently, via additive drift theorem \citep{LenglerDriftBookChapter}, we need at most
  $(n/2)(1-4\alpha)^{-1} + o(n)$ iterations where $p_+$ equals $p^*$.
  
   After the first time where $p_+$ gets equal to $p^*$, there is a possibility  of getting $p_+$ away from $p^*$. With the probability of $\epsilon$, the algorithm chooses probability pair $p_-$. The expected number of phases to reach $p^*$ again is $(1-2\epsilon)^{-1}$ from Gambler's Ruin \cite{Feller1}, resulting in $N \epsilon  (1-2\epsilon)^{-1}$ extra iterations for each phase where $p^*$ is~$p_+$. The expected number of steps needed until we have one step with pair~$p^*$ is $\epsilon(1-2\epsilon)^{-1}$ and by linearity of expectation this factor also holds for the expected number of such steps in the drift analysis.
  
  Overall, the expected number of iterations of the first epoch is at most 
  \[ \frac{n}{2}(1-4\alpha)^{-1}\frac{\epsilon}{1-2\epsilon}+\frac{N}{2\alpha(1-2\epsilon)}+o(n)
  = \frac{n}{2}(1-4\alpha)^{-1}\frac{\epsilon}{1-2\epsilon} 
  + O(\alpha^{-9})+o(n)
  ,\]
  where we used $N=o(n)$.

In the second epoch where $\zeros{x}\le n^{5/6}$, the expected number of iterations is at most $O(n^{5/6}/\alpha)$ since in the worst case with the probability pair $p_-=(\alpha/\zeros{x},(1-\alpha)/\ones{x})$ the drift is $(s\alpha)/s(1-(1-\alpha)/(n-s))^{n-s}\geq 2\alpha$.

All together and by our assumption that $\alpha=\omega(n^{-1/12}) \cap (0,1/4)$, we have
\[
E[T] \leq (1-4\alpha)^{-1}n/2+o(n).
\]
  
Moreover, in order to compute the lower bound, we have the upper bound $s(1-\alpha)/s = 1-\alpha$ on the drift, so through additive drift theorem \citep{LenglerDriftBookChapter}, we have \[E[T] \geq (1-\alpha)^{-1}n/2.\] 
\end{proofof}

\section{Different Targets than All-Ones and All-Zeros}
\label{sec:onemax-mixed}
After we have seen that the self-adjusting asymmetry is beneficial on the usual \onemax and \zeromax function,
we now demonstrate that this  self-adaptation  does not considerably harm 
its optimization time on $\onemax_a$ for targets 
$a$ different from the all-ones and all-zeros string. 
A worst case bound, obtained by assuming a pessimistic strength of $\alpha$ for the bits that still have to 
be flipped, would be $O((n/\alpha)\log z)$ following the ideas 
from \cite{JansenSudholtECJ10}. We show that the factor $1/\alpha$
stemming from this pessimistic assumption does not actually appear since 
it becomes rare that the strength takes such a worst-case value on $\onemax_a$ 
if $a$ contains both many zeros and ones. Hence, we obtain the same asymptotic bound 
as for the original asymmetric \ooea in this case.

\begin{theorem}
Assume $z=\min\{\ones{a}, \zeros{a}\}=\omega(\ln n)$, $1/\alpha=O(\log z/\!\log\log n)$ and $N=O(1)$. Then the expected optimization
time of the \asymsa  on $\onemax_a$ is $O(n\log z)$,  where the implicit constant in the $O$ does not depend on $\alpha$.
\end{theorem}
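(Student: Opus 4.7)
The plan is to adapt the drift analysis of the static asymmetric \ooea by Jansen and Sudholt on $\onemax_a$ to the self-adjusting setting, showing that the factor $1/\alpha$ of a naive pessimistic bound does not appear because the self-adjusting mechanism aligns the strength with the currently dominant bit-type. Let $d_0 = |\{i : x_i = 0,\, a_i = 1\}|$ and $d_1 = |\{i : x_i = 1,\, a_i = 0\}|$ so that $H(x,a) = d_0 + d_1$. Without loss of generality assume $\ones{a} = z$, which gives $d_0 \leq z$ and $d_1 \leq n - z$ at all times. A standard Chernoff argument yields $\zeros{x} = \Theta(n)$ throughout the run, while the identity $\ones{x} = z - d_0 + d_1$ ensures $\ones{x} = \Theta(\max(d_1, z))$ as soon as $d_0 = O(z)$, which is what is needed for the drift bounds below.

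The first technical step is a generalization of Lemma~\ref{lem:diff-beta} to $\onemax_a$. The same coupling shows that the difference in strict-improvement probabilities between the pairs $p_+$ and $p_-$ is at least of order $\alpha\,(d_0/\zeros{x} - d_1/\ones{x})$ in favor of $p_+$ when the right-hand side is positive, and symmetrically otherwise. Combined with the standard ``flip exactly one wrong bit'' lower bound on strict-improvement probability, this yields the drift lower bound $E[H_t - H_{t+1}] \geq c(d_0 r_0/\zeros{x} + d_1 r_1/\ones{x})$ for a constant $c > 0$ whenever $r_0, r_1 \in [2\alpha, 1-2\alpha]$, controlling both state and strength dynamics.

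The runtime argument then splits the run into regimes. In the initial regime $d_1 \gg d_0$, the strength walk of $r_0$ is biased downward; since $N = O(1)$ makes each per-phase sign decision noisy, the effective drift on $r_0$ is only of order $\alpha^2$ per phase, but over $O(1/\alpha^2)$ phases -- which is polylogarithmic in $z$ and hence $o(n)$ steps -- the walk reaches a state with $r_1 = \Omega(1)$ with high probability. Once this happens, the drift on $d_1$ is $\Omega(d_1/\ones{x})$, yielding in Jansen-Sudholt style a linear reduction of $d_1$ from $n-z$ down to $z$ in $O(n)$ steps followed by a multiplicative reduction to $0$ in $O(z \log z) = O(n \log z)$ steps. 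In parallel, once the strength adjusts to favor $r_0$ (during the balanced or $d_0$-dominant phase), the multiplicative-drift reduction of $d_0 \leq z$ to $0$ completes in $O(n \log z)$ steps. Together this gives total expected time $O(n \log z)$.

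The main obstacle is handling the coupled evolution of state and strength rigorously, and in particular showing that the strength spends only $o(n \log z)$ expected time in ``wrong'' extreme configurations across the whole run. The hypothesis $1/\alpha = O(\log z/\log\log n)$ keeps each strength-adjustment transient $o(n)$; the hypothesis $z = \omega(\log n)$ supplies the concentration of the bit counts $\zeros{x}, \ones{x}$ needed for the drift bounds; and $N = O(1)$ ensures the strength can react quickly enough to changes in which bit-type dominates. Crucially, no $1/\alpha$ factor remains in the final bound because once the strength has locked onto the dominant bit-type the drift on that type is already $\Omega(1/n)$, matching the static asymmetric operator's rate; the self-adjustment pays only during the (short) transients.
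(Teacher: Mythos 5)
There is a genuine gap, and it sits exactly where the theorem is hardest: the endgame. Your argument rests on the claim that the self-adjusting mechanism ``locks onto'' the dominant bit-type, so that the relevant strength is $\Omega(1)$ while the last wrong bits of that type are corrected. But once the Hamming distance $i$ to the target is small, the per-step improvement probability is $O(i/n)$, so a phase of length $N=O(1)$ contains an improvement only with probability $O(i/n)=o(1)$. Almost every phase then ends with $b=0$ and the algorithm makes a \emph{uniform random} choice of direction: the strength performs an essentially unbiased random walk on $\{\alpha,2\alpha,\dots,1-\alpha\}$, and the $O(\alpha\,(d_0/\zeros{x}-d_1/\ones{x}))$ bias from your generalized Lemma~\ref{lem:diff-beta} accumulates to only $O(\alpha)$ over the entire $\Theta(n/i)$ steps of a distance level --- far too little to pin the walk at the favourable end. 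In particular the walk repeatedly visits the wrong extreme, where the needed improvement probability degrades to $\Theta(\alpha i/n)$; asserting that it stays locked on is false, and without a replacement argument your final coupon-collector sum reverts to the na\"ive $O((n/\alpha)\log z)$ bound that the theorem is precisely meant to beat. (Your one-time transient bound of $O(1/\alpha^2)$ phases does not rescue this, because the transient recurs every time the walk wanders off.)

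The paper's proof avoids this by proving something weaker but sufficient: it treats the strength in the final epoch as an unbiased Markovian walk and shows, via the fair Gambler's Ruin and a Chernoff bound, that every window of $c_1/\alpha^2$ steps contains, with probability $\Omega(1)$, at least $c_2/\alpha^2$ steps with $0$-strength in $[1/4,3/4]$. During those steps both bit-types have constant strength, so each distance level $i$ is left in expected time $O(n/i)$ with no $1/\alpha$ factor, and summing gives $O(n\log z)$. For the earlier epochs (distance above $\min\{2z,n/\log^3 n\}$) the paper does not need any adaptation argument at all: it simply uses the worst-case strength $\alpha$ and absorbs the resulting $1/\alpha$ and $\log\log n$ factors using the hypothesis $1/\alpha=O(\log z/\log\log n)$. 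If you replace your lock-on step by this occupation-time argument for the unbiased strength walk (or prove some other statement guaranteeing a constant fraction of steps with both strengths bounded away from the extremes), the rest of your drift skeleton can be made to work; as written, the central claim is unsupported and the bound does not follow.
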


\begin{proof}
\Wlo{} $\ones{a}\le n/2$ and $a$ has the form $0^z 1^{n-z}$, where
we refer to the first $z$  bits as the prefix and the final $n-z$ bits as the suffix.

We divide the run of the \asymsa into three epochs according to the Hamming distance $h_t=n-H(x_t,a)$ 
of the current search point at time~$t$ to~$a$, where the division 
is partially inspired by the analysis in \cite{JansenSudholtECJ10}. If $h_t\ge 2z$ there
are at least $h_t-z\ge h_t/2$ $0$\nobreakdash-bits in the suffix that can be flipped to improve and we 
also have $\zeros{x_t} \le h_t+z\le (3/2)h_t$.  Then
the probability of an improvement is at least $((h_t/2)(\alpha/\zeros{x})) (1-\ones{x}/n)^{\ones{x}} \ge (\alpha/3) e^{-1-o(1)}$.  Hence, the expected length of the first epoch is $O(n/\alpha)=O(n\log z)$ since $1/\alpha=O(\log z/\!\log\log n)$.

In the second epoch, we have $h_t \le 2z$. This epoch ends when $h_t \le n/\!\log^{3} n$ and may therefore be empty. 
Since nothing is to show otherwise, we assume $z\ge n/\!\log^{3} n$ in this part of the analysis.
Since the probability 
of flipping an incorrect bit and not 
flipping any other bit is always at least $(\alpha/n)e^{-1+o(1)}$ 
the expected length of this epoch is
at most 
$
\sum_{i=n/\!\log^{3} n}^z \frac{e^2 n}{\alpha i} = O((n/\alpha)\ln\ln n), $
which is $O(n\ln z)$ since $1/\alpha=O(\log z/\!\log\log n)$.

At the start of the third epoch, we have $h_t \le z^*$, where $z^*=\min\{2z,n/\!\log^{3} n\}$. The epoch ends when the optimum is found and is divided into $z^*$ phases of varying length.
Phase $i$, where $1\le i\le z^*$, starts when the Hamming distance to~$a$ has become~$i$ and ends before the step where the distance becomes strictly less than $i$; the phase may be empty. The aim is to show that the expected length 
of phase $i$ is $O(n/i)$ independently of $\alpha$. To this end, we concretely consider a phase of length $cn/i$ for a sufficiently large constant $c$ and divide 
it into subphases of length $c_1/\alpha^2$ for a sufficiently large constant $c_1$. The crucial observation we will prove is that such a subphase with 
probability~$\Omega(1)$ contains at least $c_2/\alpha^2$ steps such that the $r_0$-strength is in the interval $[1/4,3/4]$, with $c_2$ being another 
sufficiently large constant. 
During these steps the probability  of ending the phase due to an improvement is at least $c_3 i/n$ 
for some constant $c_3>0$. Hence, the probability of an improvement within 
$O(n\alpha^2/i)$ such subphases is $\Omega(1)$, proving that the expected number of phases of length~$cn/i$ is $O(1)$. 
Altogether, the expected length of phase~$i$ is $O(n/i)$. Note that $i\le z^*$ and $1/\alpha=O(\log n/\log\log n)$, 
so that $n\alpha^2/i$ is asymptotically larger than~$1$. Summing over 
 $i\in\{1,\dots,z^*\}$, the expected length of the third epoch is $O(n\log z)$. 

We are left with the claim that at least $c_2/(2\alpha^2)$ steps in a subphase of length $c_1/\alpha^2$ have a $0$\nobreakdash-strength within $[1/4,3/4]$. To this end, we study 
the random process of the $0$\nobreakdash-strength and relate it to an unbiased 
Markovian random walk on the state space 
$\alpha,2\alpha,\dots,1-2\alpha,1-\alpha$, which we rescale to $1,\dots,\alpha-1$ by multiplying 
with $1/\alpha$. 
If the overarching phase of length~$cn/i$ leads to an improvement, there is nothing to show. Hence, we pessimistically assume that no improvement has been found yet in the considered sequence of subphases of length~$c_1/\alpha^2$ each.  Therefore, since the algorithm makes a uniform choice in the absence of improvements, both the probability of increasing and decreasing the $0$\nobreakdash-strength equal $1/2$. 
The borders $1$ and $\alpha-1$ of the chain are reflecting with probability $1/2$ and looping with the remaining probability.

  From any state of the Markov chain, it is sufficient to bridge a distance of $m=1/(2\alpha)$ to reach~$1/2$. Pessimistically, 
we assume strength $\alpha$ (\ie, state~$1$) at the beginning of the subphase. Using well-known results for the fair Gambler's Ruin \citep{Feller1}, the expected time to reach state~$m$ from~$X_0$ 
conditional on 
not reaching  
state~$0$ (which we here interpret as looping on state~$1$) is $X_0(m-X_0)$.
Also, the probability of reaching $m$ from $X_0$ before reaching~$0$ equals
$ 
X_0/m
$
and the expected number of repetitions until state~$m$ is reached is the reciprocal of this. 

Combining both, the expected time until $m$ is reached from~$X_0=1$ is at most 
$m^2-m$, and by Markov's inequality, the time is at most $2m^2=\alpha^{-2}$ with probability at least~$1/2$.

We next show that the time to leave the interval 
$[m/2,3m/2]$ (corresponding to strength $[1/4,3/4]$) is at least $c/\alpha^2$ with probability at least $\Omega(1)$. To this end, we apply Chernoff bounds 
to estimate the number of decreasing steps within $c/\alpha^2$ steps and note that it is less than $1/\alpha$ with probability $\Omega(1)$. 

Altogether, with probability $\Omega(1)$ a subphase of length $c_1/\alpha^2$ contains 
at least $c_2/\alpha^2$ steps with strength within $[1/4,3/4]$ as suggested.

The theorem follows by summing up the expected lengths of the  epochs.\end{proof}

\section{Experiments}
\label{sec:experiments}
The results in the previous sections are mostly asymptotic. In addition, although the obtained sufficient value for $N$ derived in Lemma~\ref{lem:diff-beta} is constant for $\alpha=\Omega(1)$, it can be large for a relatively small bit string. Hence, in this section, we present the results of the experiments conducted in order to see how the presented algorithm performs in practice.
	
	We ran an implementation of Algorithm~\ref{algo:asymsa} (\asymsa) on \onemax and $\onemax_a$ with $a=0^{n/2}1^{n/2}$ and with $n$ varying from 8000 to 20000. The selected parameters of the Algorithm \asymsa are  $\alpha=0.1$ and $N=50$.
	We compared our algorithm against the \ooea with standard mutation rate $1/n$ and asymmetric \ooea proposed in \cite{JansenSudholtECJ10}.

\begin{figure}
	\includegraphics[width=\linewidth]{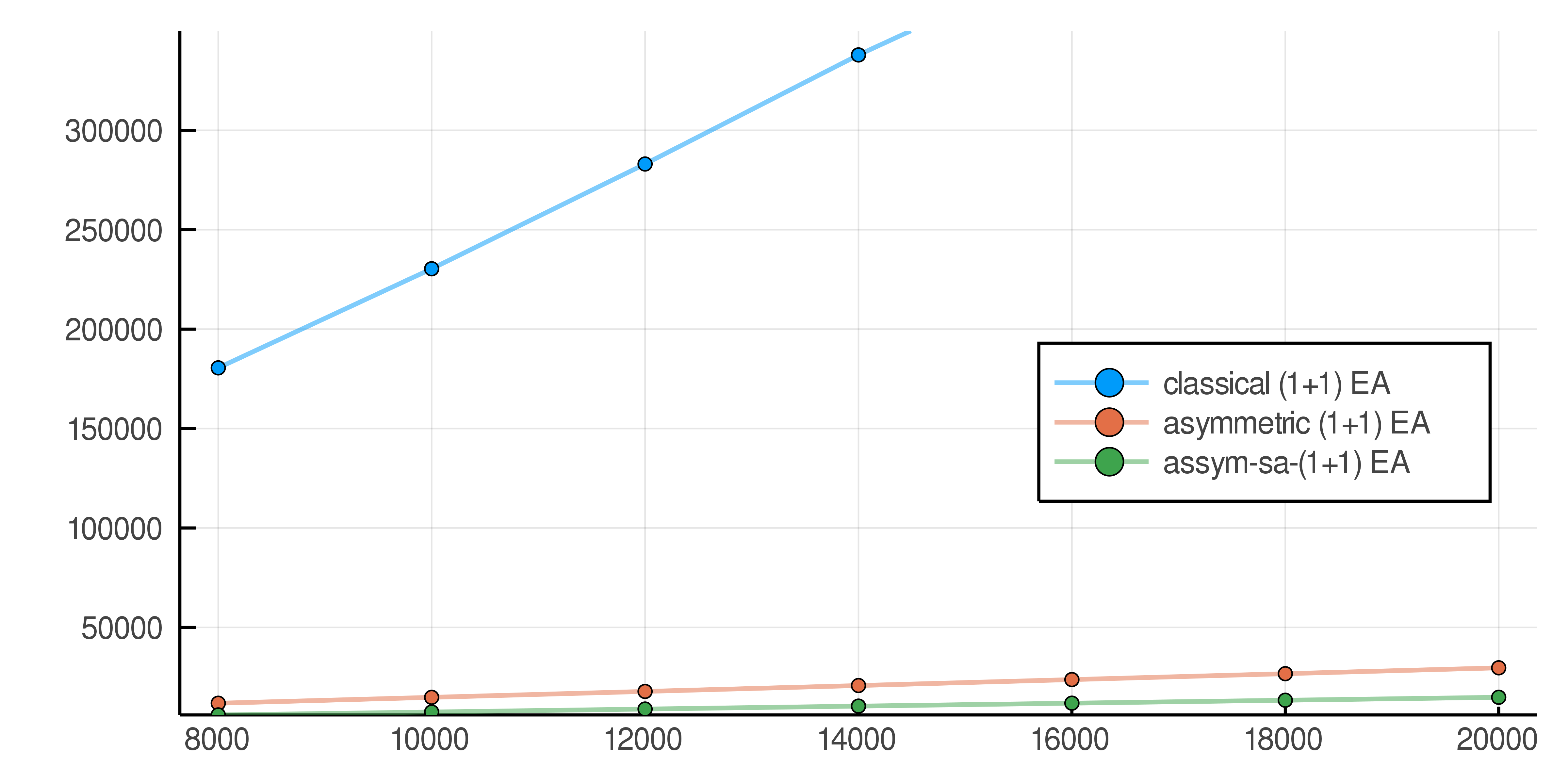}
	\caption{Average number of fitness calls (over 1000 runs) the mentioned algorithms took to optimize $\onemax$.}
	\label{fig:onemax}
\end{figure}

The average and the standard deviation of optimization time of the experiment carried out on \onemax can be seen in Figure~\ref{fig:onemax} and Table~\ref{tab:onemax}. There is a clear difference between the performance of classical \ooea and the algorithms using asymmetric mutations. It shows that these biased mutations can speed up the optimization time considerably. Also, the \asymsa outperforms asymmetric \ooea although we used relatively small $N$ and large $\alpha$. In detail, the average number of iterations that the \asymsa and asymmetric \ooea took to optimize for $n=20000$  are 14864 and 29683 respectively. By considering the standard deviation of 238 and 385 as well, we can argue that our algorithm is faster by a factor of empirically around~$2$.

\begin{table}[H]
\begin{tabularx}{\textwidth}{Y|Y|Y|Y|Y|Y|Y|}
\cline{2-7}
\multirow{2}{*}{}           & \multicolumn{2}{l|}{classical \ooea} & \multicolumn{2}{l|}{asymmetric \ooea} & \multicolumn{2}{l|}{\asymsa} \\ \cline{2-7} 
                            & $\bar{x}$               & $\sigma$     & $\bar{x}$       & $\sigma$     &  $\bar{x}$          & $\sigma$          \\ \hline
\multicolumn{1}{|l|}{n=8000}  &180516&27761.1&11875.8&240.891&5988.52&151.843\\ \hline
\multicolumn{1}{|l|}{10000} &230356&34715.8&14844.5&268.439&7464.44&170.273\\ \hline
\multicolumn{1}{|l|}{12000} &283046&42255.1&17807.3&292.445&8944.98&188.93\\ \hline
\multicolumn{1}{|l|}{14000} &337897&48529.9&20775.6&310.154&10423.6&200.662\\ \hline
\multicolumn{1}{|l|}{16000} &387250&54296.7&23765.8&324.768&11900.4&214.656\\ \hline
\multicolumn{1}{|l|}{18000} &442638&61438.8&26729.5&366.108&13380.6&225.03\\ \hline
\multicolumn{1}{|l|}{20000} &499727&65972.0&29683.7&385.152&14864.6&238.49\\ \hline
\end{tabularx}
\caption{Average number ($\bar{x}$) and standard deviation ($\sigma$) of fitness calls (over 1000 runs) the mentioned algorithms took to optimize $\onemax$.}
	\label{tab:onemax}
\end{table}

\begin{table}[ht]
\small
\centering
\begin{tabularx}{\textwidth}{|c|Y|Y|Y|Y|Y|Y|Y|}
\hline 
\diagbox{Algorithm}{$n$} & 8000   & 10000  & 12000  & 14000  & 16000  & 18000  & 20000  \\ \hline
classical \ooea & 180019 & 231404 & 283965 & 335668 & 389859 & 444544 & 498244 \\ \hline
asymmetric \ooea  & 180325 & 231999 & 283025 & 338264 & 391415 & 443910 & 502393 \\ \hline
\asymsa    & 180811 & 232412 & 284251 & 337519 & 390710 & 446396 & 503969 \\ \hline
\end{tabularx}
\vspace{1mm}
\caption{Average number of fitness calls (over 1000 runs) the mentioned algorithms took to optimize $\onemax_a$ with $a=0^{n/2}1^{n/2}$.}
\label{table:onemax_a}
\end{table}

In Table \ref{table:onemax_a}, the average numbers of iterations which are taken for the algorithms to find the optimum on $\onemax_a$ with $a=0^{n/2}1^{n/2}$ are available. The similarity between data for each bit string size suggests that the asymmetric algorithms perform neither worse nor better compared to classical \ooea. More precisely, all p-values obtained from a Mann-Whitney U test between algorithms, with respect to the null hypothesis of identical behavior, are greater than 0.1.

\section*{Conclusions}
We have designed and analyzed a \ooea with self-adjusting asymmetric mutation. The underlying mutation operator chooses $0$- and $1$\nobreakdash-bits of the current search point with 
different probabilities that can be adjusted based on the number of successes with
a given probability profile.

As a proof of concept, we analyzed this algorithm on instances from the function class $\onemax_a$ describing the number of matching bits with a target~$a\in\{0,1\}^n$. A rigorous runtime analysis shows that on the usual \onemax function with target $1^n$ (and analogously for target~$0^n$), the asymmetry of the operator is adjusted in a beneficial way, leading to a constant-factor speedup compared to the asymmetric \ooea without self-adjustment from~\cite{JansenSudholtECJ10}. For different targets~$a$, the asymmetry of our operator does not become considerably pronounced so that the self-adjusting scheme asymptotically does not slow down the algorithm. Experiments confirm that 
our algorithm is faster than the static asymmetric 
variant on \onemax with target~$1^n$ and not considerably 
slower for a target with equal number of $0$- and $1$\nobreakdash-bits.

An obvious topic for future research is to consider other ranges for the parameter~$r$ that determines the maximum degree of asymmetry chosen by the algorithm. In the present framework, this parameter is linked to the expected number of flipping bits (regardless of their value), so that it is usually not advisable to increase it beyond constant values. Instead, we plan to investigate settings with two parameters where $0$- and $1$-bits each have 
their self-adjusted mutation strengths. 

\section*{Acknowledgement}
	This work was supported  by a grant by the Danish Council for Independent Research  (DFF-FNU  8021-00260B).

\bibliographystyle{mynatbib_english}

\bibliography{references}

\end{document}